\documentclass{article}

\usepackage{microtype}
\usepackage{graphicx}
\usepackage{subcaption}
\usepackage{booktabs} 

\usepackage{hyperref}

\newcommand{\Pcal}{\mathcal{P}}

\usepackage[accepted]{icml2026}

\usepackage{amsmath}
\usepackage{amssymb}
\usepackage{mathtools}
\usepackage{amsthm}
\usepackage{multirow}

\usepackage[capitalize,noabbrev]{cleveref}

\theoremstyle{plain}
\newtheorem{theorem}{Theorem}[section]

\theoremstyle{definition}

\theoremstyle{remark}

\usepackage[textsize=tiny]{todonotes}

\icmltitlerunning{CSD: Content-aware Speculative Decoding for Efficient Image Generation}

\begin{document}

\twocolumn[
  \icmltitle{CSD: Content-aware Speculative Decoding for Efficient Image Generation}



  \icmlsetsymbol{equal}{*}

  \begin{icmlauthorlist}
    \icmlauthor{Mingcheng Wang}{1}
    \icmlauthor{Junbo Qiao}{1}
    \icmlauthor{Yunchen Li}{1}
    \icmlauthor{Lingfu Jiang}{1}
    \icmlauthor{Wei Li}{2}
    \icmlauthor{Jie Hu}{2}
    \icmlauthor{Jiao Xie}{1}
    \icmlauthor{Zhou Yu}{1,3}
    \icmlauthor{Xinghao Chen}{2}
    \icmlauthor{Guixu Zhang}{1}
    \icmlauthor{Shaohui Lin}{1,3}
  \end{icmlauthorlist}

  \icmlaffiliation{1}{East China Normal University}
  \icmlaffiliation{2}{Huawei Foundation Model Dept}
  \icmlaffiliation{3}{Key Laboratory of Advanced Theory and Application in Statistics and Data Science-MOE}

  \icmlcorrespondingauthor{Shaohui Lin}{shlin@cs.ecnu.edu.cn}
  \icmlcorrespondingauthor{Zhou Yu}{zyu@stat.ecnu.edu.cn}

  \icmlkeywords{Machine Learning, ICML}

  \vskip 0.3in
]


\printAffiliationsAndNotice{}  

\begin{abstract}
Speculative decoding (SD) has emerged as a key solution to accelerate the inference of autoregressive models. However, in the field of image generation, it faces the challenge of low acceptance rates, and directly relaxing its criteria leads to degradation in image quality. In this paper, we propose a novel content-aware speculative decoding algorithm, termed CSD, which integrates an entropy-based probability relaxation mechanism with an optimal resampling strategy to enhance the inference efficiency for autoregressive image generation. By leveraging the informational uncertainty inherent in different regions of an image, CSD dynamically adjusts the acceptance probability of candidate tokens, increasing the acceptance rate in low-detail areas to accelerate generation. Moreover, a distribution alignment filter is introduced to ensure the output distribution to be aligned with the target model, which significantly improves the generative quality. Experiments conducted on Lumina-mGPT and Janus-Pro demonstrate that the superiority of the proposed CSD. Our source code is available at https://github.com/aderfebr/CSD.
\end{abstract}

\section{Introduction}
\begin{figure}
    \centering
    \includegraphics[width=1\linewidth]{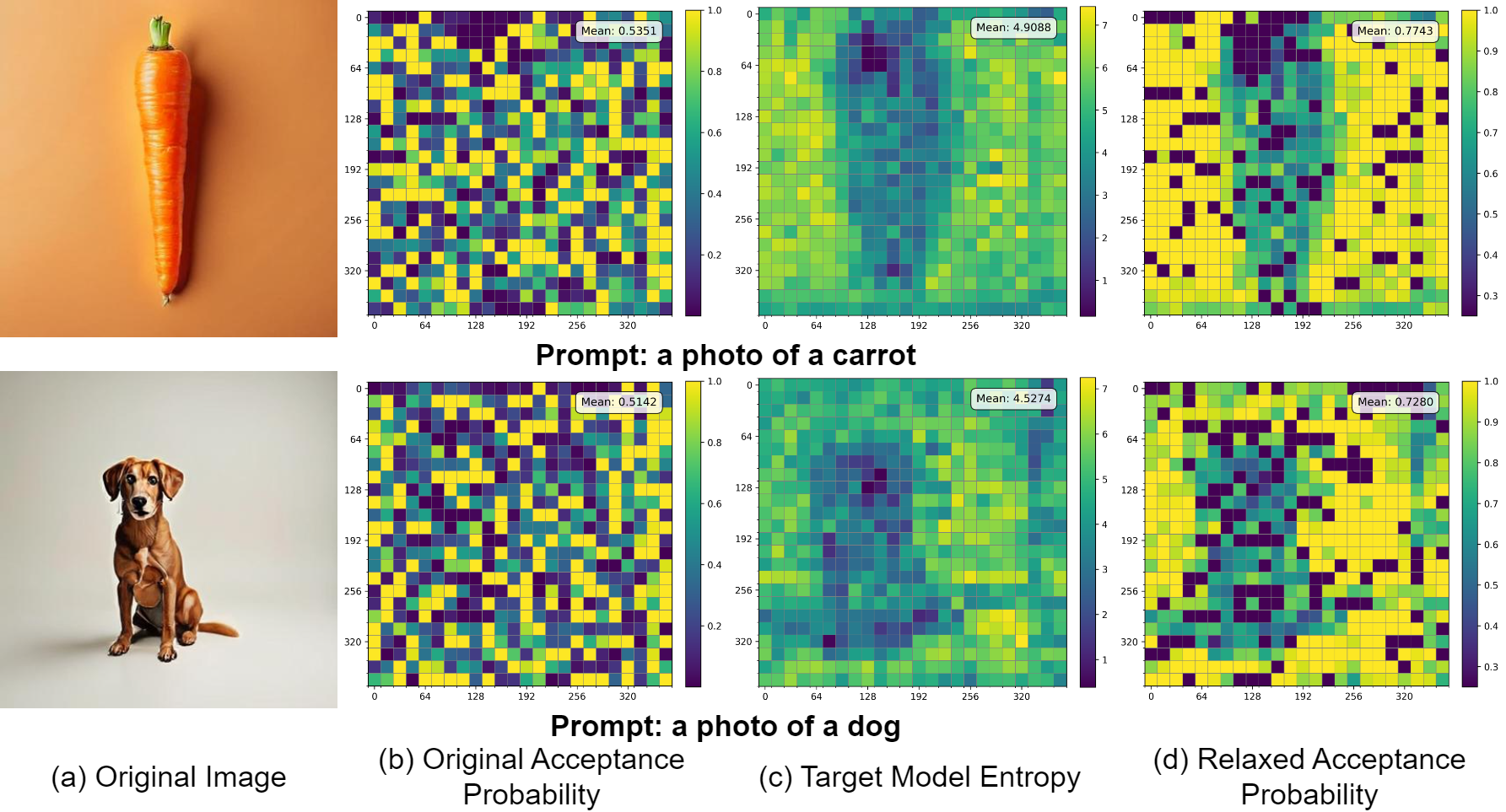}
    \caption{Motivation of CSD. (a) Original image. (b) The original acceptance probability of SJD shows no rules to generate image, which is typically content-agnostic. (c) The entropy of target model shows the image intrinsic pattern, which can be used to reduce the computation of smooth regions by adding to the acceptance probability. (d) The relaxed acceptance probability has a high probability to reduce the computation on the smooth regions for fast decoding.  }
    \label{fig:SJD_visual}
\end{figure}
Autoregressive (AR) models have demonstrated a strong capacity for modeling complex data distributions through their token-by-token generation mechanism, as evidenced by their success in large language models (LLMs)~\cite{achiam2023gpt,liu2024deepseek,yang2025qwen3}. Inspired by this progress, recent studies have introduced the AR modeling paradigm into the domain of image generation and achieved remarkable results~\cite{team2024chameleon,liu2024lumina,chen2025janus}. These advances position AR image models as competitive alternatives to diffusion-based methods. Moreover, AR-based image generation models exhibit several unique advantages, such as natural support for multimodal generation~\cite{team2024chameleon}.

Despite these benefits, autoregressive image generation suffers from a fundamental limitation via strictly sequential decoding. Generating a high-resolution image requires the model to produce thousands of tokens one by one, which significantly increases the computation cost for image generation. For example, on a single NVIDIA A6000 GPU, Janus-Pro 7B~\cite{chen2025janus} requires an average inference time of 22.95s to generate a single image of $384\times384$ resolution, while Lumina-mGPT 7B~\cite{liu2024lumina} requires approximately 121.48s to produce an image at a higher resolution of 768×768. This efficiency problem of AR models has become a major obstacle to the practical deployment, especially in resource-limited applications.

To overcome the decoding inefficiency of autoregressive image generation, non-autoregressive (NAR) approaches ( e.g., MaskGIT~\cite{chang2022maskgit} and AutoNAT~\cite{ni2024revisiting}) aim to generate images via parallel decoding and mask prediction. However, their parallel generation capability stems from their inherent design paradigm and cannot be directly transferred to autoregressive generation without fundamentally altering the model architecture. Recently, speculative decoding (SD) based multi-token prediction has garnered significant attention due to its inherent flexibility. SD uses a small draft model to predict multiple candidate tokens, which are verified in parallel by the large target model. Obviously, SD requires less time to run a large target model, compared to the traditional one-by-one token prediction of AR. For example, EAGLE~\cite{li2024eagle} trains a lightweight auxiliary model to achieve multi-token prediction, while SJD~\cite{teng2024accelerating} utilizes Jacobi decoding to generate multiple tokens without additional models. Although existing approaches accelerate image generation, these speculative decoding strategies are content-agnostic. The background in the top figure in~\cref{fig:SJD_visual} (a) shows the relatively uniform pattern, occupied with a significant area, yet the acceptance rate using SJD in~\cref{fig:SJD_visual}(b) does not differ substantially from that in the bottom figure. Moreover, there is no pattern to the token acceptance, making it impossible to effectively explain the token selection mechanism of SJD.

Actually, natural images often exhibit significant heterogeneity, comprising both smooth regions and textured areas. Smooth regions tend to have higher redundancy, making them easier to predict and particularly well-suited for probabilistic guessing in speculative decoding. In contrast, fine-grained textures are far less predictable and considerably more challenging to generate accurately. We found entropy can serve as a measure of confidence in LLM reasoning~\cite {kang2025scalable}. We assume that this entropy-based confidence is often closely correlated with texture-related information for autoregressive image generation. As such, we perform a fine-grained analysis of content-dependent factors in autoregressive image generation. As illustrated in~\cref{fig:SJD_visual}(c), the average entropy of the target model in the top image is higher than that in the bottom image. Moreover, regions with higher entropy strongly correlate with background areas, whereas regions with lower entropy correspond to fine-grained textures. This finding raises a key question: \emph{Can entropy be leveraged to design a content-aware speculative decoding for more efficient and rational image generation?}

To answer the above question, we propose a \textbf{Content-aware Speculative Decoding} algorithm, termed \emph{CSD}, which leverages the entropy of the target model as a guiding signal to relax the acceptance probability, thereby enabling better acceleration in smooth regions. Under the content-aware probabilistic relaxation, the resampling distribution used in the original speculative decoding remains optimal. Furthermore, we introduce a distribution alignment filter based on total variation (TV) distance to determine when probabilistic relaxation should be applied, thereby further mitigating potential degradation in generation quality.~\cref{fig:SJD_visual}(d) presents the acceptance probabilities after processing, which have been mapped for better visualization. It can be observed that the post-processed acceptance probabilities are highly correlated with target model entropy, achieving content-aware speculative decoding. Extensive experiments demonstrate that the proposed CSD enables content-aware adaptive acceleration to outperform existing SD approaches in terms of both efficiency and generation performance.

Our main contributions are summarized as below:
\begin{itemize}
    \item We propose a novel content-aware speculative decoding algorithm, which is implemented by leveraging the entropy of the target model as a confidence criterion for the probabilistic relaxation in SD. Moreover, we prove that the corresponding resampling distribution can be regarded as an optimal resampling distribution.
    \item We design a distribution alignment filter to reduce the discrepancy between the generated sequence and the target one, improving the image generation quality.
    \item Extensive experiments show the superiority of the proposed CSD. For example, we accelerate the inference of Janus-Pro~\cite{chen2025janus} by 4.33$\times$ with only a negligible CLIP score drop on the MS-COCO benchmark~\cite{lin2014microsoft}, outperforming the SOTA GSD~\cite{so2025grouped}.
\end{itemize}

\begin{figure*}[t]
    \centering
    \includegraphics[width=1\linewidth]{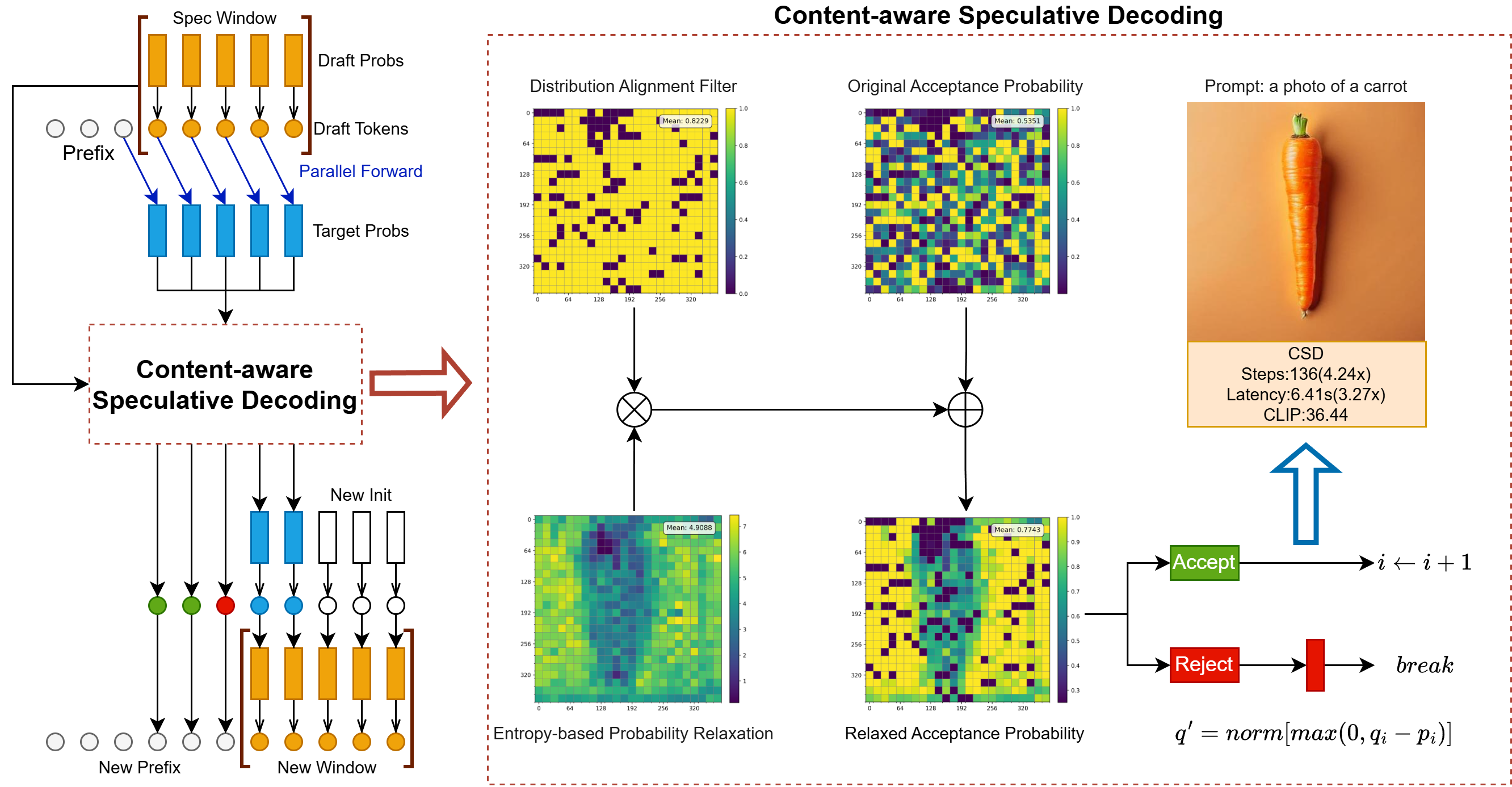}
    \caption{Overview of the proposed CSD framework. It consists of two key components: (1) an entropy-based probability relaxation module that dynamically increases acceptance probability in low-detail regions, and (2) a distribution alignment filter based on TV distance, which preserves output quality by excluding tokens where the distribution discrepancy is large.}
    \label{fig:overview}
\end{figure*}

\section{Related Work}

\subsection{Autoregressive Image Generation}


Early pixel-level autoregressive image models, such as PixelRNN, PixelCNN~\cite{van2016pixel}, and PixelCNN++~\cite{salimans2017pixelcnn++}, generate images by raster scanning and sequentially predicting each pixel. DALL·E~\cite{ramesh2021zero} introduces a token-based paradigm that encodes images into discrete visual tokens via VQ-VAE~\cite{van2017neural} and models them using large-scale autoregressive transformers. LlamaGen~\cite{sun2024autoregressive} further adapts the next-token prediction framework from large language models to image generation. Recent efforts have explored unified multimodal autoregressive modeling. Chameleon~\cite{team2024chameleon} enables mixed processing and generation of images and text within a single model. Lumina-mGPT~\cite{liu2024lumina} builds upon this foundation with multimodal generative pre-training and progressive supervised fine-tuning. Janus-Pro~\cite{chen2025janus} further improves generation quality through optimized training strategies and large-scale data, achieving state-of-the-art performance among autoregressive image models. However, these methods are constrained by the inherent of autoregressive image decoding. This property makes them difficult to deploy in latency-sensitive scenarios.




\subsection{Speculative Decoding for Image Generation}

Speculative Decoding (SD)~\cite{leviathan2023fast} is an effective technique to accelerate autoregressive decoding. It employs a lightweight draft model to produce candidate tokens, which are then verified in parallel by a target model to preserve the distributional consistency. However, suitable off-the-shelf draft models are often unavailable for autoregressive image generation. Recent work has explored dependent draft approaches. EAGLE~\cite{li2024eagle} trains an auxiliary model to predict changes in the target model’s hidden states. LANTERN~\cite{jang2024lantern} further aggregates token probabilities within clusters while constraining the total variation distance. SJD2~\cite{teng2025speculative} introduces a next-clean-token prediction paradigm to serve as a draft model. While these training-based methods achieve substantial step compression with high visual quality, they incur additional overhead during both training and inference.

In contrast, training-free acceleration methods have also been investigated. Jacobi Decoding~\cite{song2021accelerating} enables parallel updates by iteratively refining draft tokens until convergence. SJD~\cite{teng2024accelerating} integrates Jacobi Decoding with speculative decoding to achieve lossless acceleration. ZipAR~\cite{he2024zipar} exploits spatial locality to enable parallel decoding across image rows. GSD~\cite{so2025grouped} relaxes acceptance criteria by grouping tokens based on probability and embedding similarity. Unlike these methods, our approach focuses on content-aware speculative decoding and dynamically adapts the decoding strategy based on image content, without introducing additional training overhead.

\subsection{Content-adaptive Generation}

In the field of image generation, research efforts have been directed toward adaptive generation methods. For Non-Autoregressive Transformers, AutoNAT~\cite{ni2024revisiting} constructs an automated framework capable of systematically searching for the optimal strategy combinations tailored to specific NAT models and datasets. AdaNAT~\cite{ni2024adanat} employs reinforcement learning to train a lightweight policy generation network, which automatically configures a tailored generation strategy for each sample. ENAT~\cite{ni2024enat} achieves a notable improvement in model performance while significantly reducing computational costs by independently encoding visible tokens and prioritizing the computation and updating of representations for a few critical tokens. In the Diffusion Transformer architecture, RelaCtrl~\cite{cao2025relactrl} customizes the positioning, parameter scale, and modeling capacity by evaluating the ControlNet Relevance Score. DiffCR~\cite{you2025layer} addresses the inherent variation in importance among different image tokens by proposing an automated learning framework that dynamically allocates computational resources across layers and timesteps. However, these methods are incompatible with next-token-prediction paradigms and often require additional training overhead.

\section{Method}
\subsection{Preliminaries on Speculative Decoding}
Speculative Decoding (SD) is a technique designed to accelerate the inference of autoregressive (AR) models by leveraging two models: an accurate target model $M_q$ and an efficient draft model $M_p$. Given a context prefix $X$, the method first uses the draft model $M_p$ to generate $\gamma$ candidate tokens autoregressively, yielding distributions $p_k(\cdot)$ from draft model and $q_k(\cdot)$ from target model for each position $k$, respectively. Token generated by draft model $\hat{x}_k$ is accepted with probability $b_k:=\min(1,\dfrac{q_k(\hat{x}_k)}{p_k(\hat{x}_k)})$. If a token is rejected, it is replaced by a new token sampled from the resampling distribution $\Pcal(x)=\text{norm}\{\max(0,q(x)-p(x))\}$. These tokens (accepted ones and the resampled one at the rejection point) are then appended to the context prefix $X$, forming the new input for the next round of speculative decoding. After this step, a new batch of candidate tokens is produced, and the process repeats. Critically, this mechanism guarantees that the final output distribution exactly matches that of the target model $M_q$, while ensuring that each run generates at least one new token.

It has been proved that the aforementioned procedure can accelerate generation with no degradation in output distribution~\cite{leviathan2023fast}. When the expected acceptance rate is $\alpha$, the expected number of tokens generated per iteration is:
\begin{equation}
    \frac{1-\alpha^{\gamma+1}}{1-\alpha}.
\end{equation}
Considering that generating the draft tokens incurs computational overhead, let $c$ denote the ratio of the time cost per iteration between the draft model and the target model. The actual expected improvement factor is:
\begin{equation}
\frac{1-\alpha^{\gamma+1}}{(1-\alpha)(\gamma c+1)}.
\end{equation}

Although SD accelerate the inference process, it cannot be adaptive decoding based on image content, such that significant redundant computation is introduced in the simple, smooth regions of the generative image.

\subsection{Framework of the Proposed CSD}

\begin{algorithm}[tb]
  \caption{Content-aware Speculative Decoding}
  \label{alg:Content-aware Speculative Decoding}
  \begin{algorithmic}
    \STATE {\bfseries Input:} Context prefix $X$, Draft model $M_p$, Target model $M_q$, Window size $\gamma$, Maximum length $N$
    \STATE $n \leftarrow 0$
    \WHILE{$n<N$}
        \FOR{$k=1:\gamma$}
        \STATE $p_k \leftarrow M_p(X,\hat{x}_1,\dots,\hat{x}_{k-1}), \hat{x}_k \sim p_k$
        \ENDFOR
        \STATE $q_1,\dotsm,q_{\gamma+1} \leftarrow M_q(X),\dots,M_q(X,\hat{x}_1,\dots,\hat{x}_\gamma)$
        \FOR{$k=1:\gamma$}
        \IF{$\text{TV}(p_k,q_k)<\delta$}
        \STATE $\text{H}(q_k)=-\sum_{x}q_k(x)log(q_k(x))$
        \STATE $\epsilon=\lambda \text{H}(q_k)$
        \ELSE
        \STATE $\epsilon=0$
        \ENDIF
        \STATE $b_k \leftarrow \min(1,\dfrac{q_k(\hat{x}_k)}{p_k(\hat{x}_k)}+\epsilon)$
        \STATE $r\sim U[0,1]$
        \IF{$r\le b_k(\hat{x}_k)$}
        \STATE $x_k \leftarrow \hat{x}_k$
        \ELSE
        \STATE $q'\leftarrow \text{norm}\{\max(0,q_k-p_k)\}$
        \STATE $x_k \sim q'$
        \STATE {\bfseries break}
        \ENDIF
        \ENDFOR
    \ENDWHILE
    \STATE {\bfseries return} $X$
  \end{algorithmic}
\end{algorithm}

As illustrated in the left part of~\cref{fig:overview}, following SJD~\cite{teng2024accelerating}, we adopt Jacobi Decoding~\cite{song2021accelerating} to generate draft tokens without requiring any auxiliary model training. Specifically, draft probabilities are derived from the target model’s outputs from previous iterations, making our approach training‑free and plug‑and‑play. The right part of the figure introduces the proposed content-aware speculative decoding (CSD), which consists of two main components: Entropy‑based Probability Relaxation and Distribution Alignment Filter. We incorporate the entropy of the target distribution into the original speculative decoding probability while employing an optimal resampling distribution. This enables dynamic relaxation of the acceptance probability, effectively increasing it in regions with fewer details or lower complexity. To mitigate the potential degradation caused by this probabilistic relaxation, we introduce a distribution alignment filter based on total variation (TV) distance. By excluding positions where the discrepancy between the target distribution $q$ and the draft distribution $p$ is excessively large, it effectively preserves the generation quality.

The detailed procedure is outlined in \cref{alg:Content-aware Speculative Decoding}. At the beginning of the iteration, the draft model is first employed to generate multiple draft probabilities, from which draft tokens are sampled. Specifically, the draft probabilities are derived from the target probabilities produced in the previous iteration, supplemented by randomly initialized probabilities. Subsequently, the target model is invoked to obtain the target probabilities at the corresponding positions. Then, the verification stage proceeds, where the TV distance between the target probabilities and the draft probabilities is computed to measure the alignment at each position. If the distance is below a predefined threshold $\delta$, $\epsilon$ is set to $\lambda H(q_k)$ and 0 otherwise, where $H(q_k)$ and $\lambda$ denote the entropy of the target distribution at position $k$ and relaxation coefficient, respectively. Thus, we obtain the acceptance probability $\min(1,\frac{q_k(\hat{x}_k)}{p_k(\hat{x}_k)}+\epsilon)$ for this position. If the token is accepted, the verification proceeds to the next position; otherwise, the process enters the rejection phase. In the rejection phase, a token is resampled from the normalized distribution $\text{norm}\{\max(0,q_k-p_k)\}$, after which the verification process is terminated. All tokens generated up to this point are appended to the output sequence, and the speculation window is adjusted in preparation for the next iteration.

\subsection{Entropy‑based Probability Relaxation}

In autoregressive (AR) image generation models, each token corresponds to an image patch. We observe that smooth regions exhibit high entropy and redundancy, making them easy for decoding. As illustrated in~\cref{fig:SJD_visual}, accelerating the generation of such high-entropy regions generally has limited effect on overall image quality. Therefore, entropy can be used as a criterion for relaxing probabilities in speculative decoding, enabling dynamic and adaptive acceleration. Additionally, entropy offers two further advantages: first, it is training-free, relying only on endogenous information produced during model inference, which allows for direct plug-and-play integration into any model; second, its computational overhead is low—unlike clustering-based methods that require sorting or pixel-based approaches that depend on VQ-VAE decoding—both of which can diminish the speed-up ratio.

However, probability relaxation may shift the output distribution, necessitating a corresponding resampling distribution to mitigate this effect. Let $b(x)$ denote a generalized acceptance probability defined over the vocabulary. The case where $b(x)>\min(1,\frac{q(x)}{p(x)})$ corresponds to the entropy-based relaxation described above. When $b(x)=\min(1,\frac{q(x)}{p(x)})$, the algorithm reduces to lossless speculative decoding. The scenario where $b(x)<\min(1,\frac{q(x)}{p(x)})$ is not considered, as lossless speculative decoding already provides better acceleration while maintaining exact distributional consistency.

Let the resampling distribution associated with $b(x)$ be denoted by $\Pcal(x)$. The sequence distribution $P(x)$ generated by this approach can be expressed as:
\begin{equation}
    P(x)=p(x)b(x)+\Pcal(x)\sum_{{x'}}[1-b({x'})]p({x'}).
\end{equation}

Our goal is to minimize the discrepancy between $P(x)$ and the target distribution $q(x)$, measured by the total variation (TV) distance~\cite{yin2024theoretical}:
\begin{equation}
\text{Loss}(b) = \min_{\Pcal} \text{TV}(P,q).
\end{equation}

To provide a unified framework, we restrict our discussion to the case where $\min(1,\frac{q(x)}{p(x)})\le b(x)\le1$, having the following~\cref{thm:optimal_solution}.



\begin{theorem}
\label{thm:optimal_solution}
Let $\min(1,\frac{q(x)}{p(x)})\le b(x)\le1$. Then the resampling distribution defined by  $\Pcal(x)=\text{norm}\{\max(0,q(x)-p(x))\}$ achieves the minimum total variation loss, i.e., $\text{Loss}^*(b) = \min_{\Pcal} \text{TV}(P,q)=\frac{1}{2}\sum_{x}|q(x)-b(x)p(x)|-\frac{1}{2}[1-b(x)]p(x)$.
\end{theorem}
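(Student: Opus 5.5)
The plan is to reduce the minimization over $\Pcal$ to a one-dimensional lower-bound argument on residuals, and then to check that $\text{norm}\{\max(0,q-p)\}$ attains the bound. I read the loss in the statement as $\frac{1}{2}\sum_x|q(x)-b(x)p(x)|-\frac{1}{2}\sum_x[1-b(x)]p(x)$; the second term is summed over $x$.

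\textbf{Step 1: rewrite the objective.} Define the residual $r(x)=q(x)-b(x)p(x)$ and the total rejection mass $R=\sum_{x'}[1-b(x')]p(x')$. Since $\sum_x q(x)=\sum_x p(x)=1$, we have $\sum_x r(x)=R\ge 0$. From the expression for $P(x)$,
\[
\text{TV}(P,q)=\tfrac12\sum_x |r(x)-R\,\Pcal(x)|.
\]
This is now a problem of approximating the signed vector $r$ by $R$ times a probability vector.

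\textbf{Step 2: a universal lower bound.} Split the vocabulary into $S_+=\{r\ge 0\}$ and $S_-=\{r<0\}$.
\begin{itemize}
\item On $S_-$, since $\Pcal\ge 0$, each term equals $|r(x)|+R\Pcal(x)$.
\item On $S_+$, the triangle inequality gives $\sum_{S_+}|r-R\Pcal|\ge \sum_{S_+} r - R\,\Pcal(S_+)$.
\end{itemize}
Using $\Pcal(S_+)=1-\Pcal(S_-)$ and adding the two pieces gives
\[
\sum_x|r-R\Pcal|\ge \sum_x|r(x)| - R + 2R\,\Pcal(S_-)\ge \sum_x|r(x)|-R.
\]
Hence every $\Pcal$ satisfies $\text{TV}(P,q)\ge \frac12\sum_x|q-bp|-\frac12\sum_x(1-b)p$. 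Equality holds exactly when two conditions are met: $\Pcal$ is supported on $S_+$ (when $R>0$), and $R\,\Pcal(x)\le r(x)$ on $S_+$, so that no absolute value on $S_+$ flips sign.

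\textbf{Step 3: the proposed $\Pcal$ meets the equality conditions.} This is the step where the hypothesis $\min(1,q/p)\le b\le 1$ is used, and the only one needing care.
\begin{itemize}
\item If $q(x)>p(x)$, then $\min(1,q/p)=1$, which forces $b(x)=1$, so $r(x)=q(x)-p(x)>0$.
\item If $q(x)\le p(x)$, then $b(x)\ge q(x)/p(x)$, so $r(x)\le 0$.
\end{itemize}
Let $Z=\sum_x\max(0,q(x)-p(x))$. The candidate $\Pcal=\max(0,q-p)/Z$ is supported on $\{q>p\}\subseteq S_+$, which gives the first equality condition.

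For the second condition, on $\{q>p\}$ we need $R\,(q-p)/Z\le q-p$, i.e.\ $R\le Z$. This follows from
\[
R=\sum_x r(x)=Z+\sum_{q\le p} r(x)\le Z,
\]
because $r\le 0$ on $\{q\le p\}$.

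\textbf{Step 4: degenerate cases.} If $Z=0$, then $p=q$ and hence $R\le 0$, so $R=0$. If $R=0$, the choice of $\Pcal$ is irrelevant and the bound holds trivially. I expect the main subtlety to be this inequality $R\le Z$ together with the sign structure of $r$; the rest is the standard triangle-inequality bound from Step 2.
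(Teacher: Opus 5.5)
Your proof is correct, and it takes a somewhat more direct route than the paper. The paper first proves by contradiction that any minimizer $\Pcal^*$ must vanish on $A_-=\{x: q(x)-b(x)p(x)<0\}$. If $\Pcal^*(\bar x)>0$ for some $\bar x\in A_-$, it finds $\hat x\in A_+$ with $\Pcal^*(\hat x)<A(\hat x)$ and shows that moving the mass of $\bar x$ onto $\hat x$ strictly lowers the objective. Only after that does it apply the triangle inequality on $A_+$, among distributions supported there.

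You instead derive a lower bound valid for \emph{every} $\Pcal$, in which mass placed on $S_-$ appears as the nonnegative penalty $2R\,\Pcal(S_-)$. This makes the exchange argument unnecessary, along with its implicit appeal to the existence of a minimizer. Your equality conditions still recover the paper's full set of minimizers $\{\Pcal:\Pcal|_{A_-}=0,\ R\,\Pcal\le q-bp \text{ on } A_+\}$. The remaining key ingredient is the same in both proofs: the inequality $R\le\sum_x\max(0,q(x)-p(x))$, which is the paper's $R\le S$.

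You also gain in two smaller places. The paper checks that $\max(0,q-bp)=\max(0,q-p)$ only for the two concrete choices $b=\min(1,q/p)$ and $b=\min(1,q/p+\epsilon)$; your sign analysis covers every admissible $b$, using the same case split the paper uses later in proving \cref{thm:pareto_front}. And you handle $R=0$, where the paper's normalization $A=D/R$ is undefined.
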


A proof of~\cref{thm:optimal_solution} is given in Appendix~\cref{apdx:proof_1}. This result characterizes the optimal resampling distribution for any relaxed acceptance probability and provides an associated loss for control purposes.

\subsection{Distribution Alignment Filter}
Even under the optimal resampling distribution, relaxing acceptance probabilities inevitably introduces a deviation from the target distribution. The following theorem, adapted from \cite{yin2024theoretical}, formalizes the relationship between the rejection probability and this distributional divergence.

\begin{theorem}
\label{thm:pareto_front}
Under the optimal resampling distribution, the following identity holds:
\begin{equation}
P(\text{reject})+\min_{\Pcal}\text{TV}(P,q)=\text{TV}(p,q),
\end{equation}
where $P(\text{reject})=\sum_{x}[1-b(x)]p(x)$
\end{theorem}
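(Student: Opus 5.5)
The plan is to take the closed form of the optimal loss from \cref{thm:optimal_solution} and show that, under the constraint $\min(1,\frac{q(x)}{p(x)})\le b(x)\le 1$, the identity reduces to a pointwise equality over the vocabulary. I read the loss in \cref{thm:optimal_solution} as
\begin{equation*}
\min_{\Pcal}\text{TV}(P,q)=\frac{1}{2}\sum_{x}|q(x)-b(x)p(x)|-\frac{1}{2}\sum_{x}[1-b(x)]p(x),
\end{equation*}
so the subtracted term is exactly $\frac{1}{2}P(\text{reject})$. Adding $P(\text{reject})$ to both sides gives
\begin{equation*}
P(\text{reject})+\min_{\Pcal}\text{TV}(P,q)=\frac{1}{2}\sum_{x}\Bigl(|q(x)-b(x)p(x)|+[1-b(x)]p(x)\Bigr).
\end{equation*}
Since $\text{TV}(p,q)=\frac{1}{2}\sum_x|q(x)-p(x)|$, it suffices to prove, for every $x$,
\begin{equation*}
|q(x)-b(x)p(x)|+[1-b(x)]p(x)=|q(x)-p(x)|.
\end{equation*}

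I will prove this pointwise equality by splitting into cases according to the sign of $q(x)-p(x)$.

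If $p(x)=0$, both correction terms vanish and the two sides both equal $q(x)$.

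If $q(x)\ge p(x)>0$, then $\min(1,q/p)=1$, which forces $b(x)=1$. The left side becomes $q(x)-p(x)\ge 0$, matching the right side.

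If $q(x)<p(x)$, then the lower bound $b(x)\ge q(x)/p(x)$ gives $b(x)p(x)\ge q(x)$, so $|q-bp|=bp-q$. Adding $(1-b)p$ yields $p-q=|q-p|$.

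Summing over $x$ and halving then gives the stated identity.

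The only real obstacle is the bookkeeping in \cref{thm:optimal_solution}: its displayed formula writes the subtracted term without the summation. I need it to mean $\frac{1}{2}\sum_x[1-b(x)]p(x)$, which is what the proof of \cref{thm:optimal_solution} should produce. If I had to re-derive this term, I would argue as follows. The mass $\sum_x[1-b(x)]p(x)$ is redistributed by $\Pcal$. It can reduce the positive deficit $\sum_x\max(0,q-bp)$, and this deficit is supported only where $q>p$, which is exactly the support of $\text{norm}\{\max(0,q-p)\}$. Each unit of mass placed there lowers the $\ell_1$ error by one, so the TV error drops by $\frac{1}{2}R$, where $R=P(\text{reject})$.

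In that re-derivation I would also check the consistency condition $R\le\sum_x\max(0,q-bp)$. Using the pointwise identity, the negative part $\sum_x\max(0,bp-q)$ minus $R$ equals $\sum_x\max(0,q-p)$ minus $R$, and both are nonnegative. The degenerate case $q=p$ is also fine: there $\Pcal$ is undefined, but $R=0$, so it is never used.
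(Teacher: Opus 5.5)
Your proof is the paper's proof: you use the closed form from \cref{thm:optimal_solution} with the subtracted term read as $\frac{1}{2}\sum_x[1-b(x)]p(x)$ (exactly as the appendix uses it), add $P(\text{reject})$, and verify the pointwise identity $|q(x)-b(x)p(x)|+[1-b(x)]p(x)=|p(x)-q(x)|$ with the same two cases, namely $q\ge p$ forcing $b=1$ and $q<p$ giving $bp\ge q$. The only slip is in your optional re-derivation, where the consistency check should read $\sum_x\max(0,bp-q)=\sum_x\max(0,q-p)-R\ge 0$, with no $-R$ on the left; this is what yields $R\le\sum_x\max(0,q-bp)$.
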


The proof of~\cref{thm:pareto_front} is provided in Appendix~\cref{apdx:proof_2}. 
This result reveals a fundamental trade-off. If the rejection probability is zero, then the divergence between the generated distribution and the target distribution remains exactly $\text{TV}(p,q)$. In this case, draft tokens are directly accepted in the final output. At the other extreme, reducing the total variation distance between $P$ and $q$ to zero requires the rejection probability to equal $\text{TV}(p,q)$. This latter case corresponds exactly to the behavior of lossless speculative decoding.

Based on~\cref{thm:pareto_front}, the TV distance provides a theoretical lower bound on the deviation between the generated distribution and the target one. To maintain generation quality, we design a dual acceptance criterion governed by the TV distance, which functions as a distributional alignment filter. When the TV distance is below a preset threshold, entropy-based probability relaxation is applied to accelerate sampling. If the TV distance exceeds the threshold, the accept criterion reverts to standard speculative decoding, thereby preserving exact distributional matching.

\section{Experiments}

\begin{table*}[t]
  \caption{Quantitative results on MS-COCO benchmark with Lumina-mGPT(7B) as the baseline.}
  \label{mgpt_coco}
  \begin{center}
    \resizebox{\textwidth}{!}{
    \begin{small}
      \begin{sc}
        \begin{tabular}{lccccccr}
          \toprule
          \multirow{2}{*}{Method} & \multirow{2}{*}{Latency($\downarrow$)} & \multirow{2}{*}{Step($\downarrow$)} & \multicolumn{2}{c}{Acceleration rate} & \multirow{2}{*}{FID($\downarrow$)} & \multirow{2}{*}{CLIP score ($\uparrow$)} \\
          \cmidrule(lr){4-5}
          & & & Latency($\uparrow$) & Step($\uparrow$) & & \\
          \midrule
          Lumina-mGPT 7B \\
          Vanilla AR~\cite{liu2024lumina}     & 121.48s & 2379   & 1.00$\times$ & 1.00$\times$ & 30.90 & 31.31 \\
          \midrule
          Training-based \\
          EAGLE~\cite{li2024eagle}   & 57.85s & 809.2 & 2.10$\times$ & 2.94$\times$ &-& \textbf{33.3}  \\
          LANTERN~\cite{jang2024lantern} & \textbf{47.45s} & \textbf{655.4} & \textbf{2.56$\times$} & \textbf{3.63$\times$} &-& 32.7  \\
          \midrule
          Training-free \\
          JD~\cite{song2021accelerating} & 117.91s & 2273.4 & 1.03$\times$ & 1.05$\times$ & \textbf{30.89} & 31.31 \\
          SJD~\cite{teng2024accelerating} & 53.73s  & 1061.9 & 2.26$\times$ & 2.24$\times$ & 30.92 & 31.31 \\
          Amplify($k=1.3$)~\cite{leviathan2023fast}& 47.85s  & 921.1  & 2.54$\times$ & 2.58$\times$ & 33.03 & 31.24 \\
          Addition($\epsilon=0.1$)~\cite{yin2024theoretical}  & 48.25s & \textbf{909.3} & 2.52$\times$ & \textbf{2.62$\times$} & 32.93 & 31.27 \\
          GSD($G=3$)~\cite{so2025grouped} & 47.99s  & 925.2  & 2.53$\times$ & 2.57$\times$ & 31.61 & 31.33 \\
          Ours($\lambda=0.5,\delta=0.35$)    & \textbf{46.76s}  & 922.8  & \textbf{2.60$\times$} & 2.58$\times$ & 31.66 & \textbf{31.49} \\
          \bottomrule
        \end{tabular}
      \end{sc}
    \end{small}
    }
  \end{center}
  \vskip -0.1in
\end{table*}

\begin{table*}[t]
  \caption{Quantitative results on MS-COCO benchmark with Janus-Pro (1B \& 7B) as the baseline.}
  \label{janus_coco}
  \begin{center}
    \resizebox{\textwidth}{!}{
    \begin{small}
      \begin{sc}
        \begin{tabular}{lccccccr}
          \toprule
          \multirow{2}{*}{Method} & \multirow{2}{*}{Latency($\downarrow$)} & \multirow{2}{*}{Step($\downarrow$)} & \multicolumn{2}{c}{Acceleration rate} & \multirow{2}{*}{FID($\downarrow$)} & \multirow{2}{*}{CLIP score($\uparrow$)} \\
          \cmidrule(lr){4-5}
          & & & Latency($\uparrow$) & Step($\uparrow$) & & \\
          \midrule
          Janus-Pro 1B \\
          Vanilla AR~\cite{liu2024lumina} & 15.29s  & 576    & 1.00$\times$ & 1.00$\times$ & 32.75 & 32.05 \\
          SJD~\cite{teng2024accelerating} & 7.86s   & 303.8  & 1.95$\times$ & 1.90$\times$ & 32.49 & \textbf{32.05} \\
          Ours($\lambda=0.5,\delta=0.35$) & 7.10s   & 270.9  & 2.15$\times$ & 2.13$\times$ & 32.63 & 32.04 \\
          Ours($\lambda=0.5,\delta=0.65$) & \textbf{4.56s}   & \textbf{173.0}  & \textbf{3.35$\times$} & \textbf{3.55$\times$} & \textbf{32.41} & 32.00 \\
          \midrule
          Janus-Pro 7B \\
          Vanilla AR~\cite{liu2024lumina} & 22.95s  & 576    & 1.00$\times$ & 1.00$\times$ & 33.82 & 32.31 \\
          JD~\cite{song2021accelerating} & 21.37s  & 537.6  & 1.07$\times$ & 1.07$\times$ & 33.71 & \textbf{32.31} \\
          SJD~\cite{teng2024accelerating} & 11.64s  & 298.8  & 1.97$\times$ & 1.93$\times$ & 33.85 & 32.28 \\
          Amplify($k=4$)~\cite{leviathan2023fast} & 7.05s   & 165.7  & 3.26$\times$ & 3.48$\times$ & 34.01 & 32.20 \\
          Addition($\epsilon=0.4$)~\cite{yin2024theoretical} & 7.13s  & 162.8  & 3.22$\times$ & 3.54$\times$ & 33.96 & 32.16 \\
          GSD($G=35$)~\cite{so2025grouped} & 6.93s   & 161.7 & 3.31$\times$ & 3.56$\times$ & 34.00 & 32.16 \\
          Ours($\lambda=0.5,\delta=0.65$) & 6.53s & 166.1 & 3.51$\times$ & 3.47$\times$ & 33.69 & 32.28 \\
          Ours($\lambda=0.5,\delta=0.8$)  & \textbf{5.30s}   & \textbf{134.3}  & \textbf{4.33$\times$} & \textbf{4.29$\times$} & \textbf{33.58} & 32.21 \\
          \bottomrule
        \end{tabular}
      \end{sc}
    \end{small}
    }
  \end{center}
  \vskip -0.1in
\end{table*}

\subsection{Experiment Setups}
\textbf{Benchmark.}
We select the validation set of MS-COCO 2017~\cite{lin2014microsoft} as the benchmark dataset, which comprises natural scenes with complex backgrounds, diverse object layouts, and contextual relationships. Each image is accompanied by human-authored natural language descriptions, which serve as ground truth for image captioning generation tasks. For training-based methods, we follow the same setting as LANTERN~\cite{jang2024lantern}, which choose ImageNet~\cite{deng2009imagenet} and LAION-COCO~\cite{schuhmann2022laion} as training dataset.

\textbf{Backbone Models.}
We selected two recent and representative AR image generation models for experiments: Lumina-mGPT~\cite{liu2024lumina} and Janus-Pro~\cite{chen2025janus}. For Lumina-mGPT, we utilized its 7B version to generate images with a resolution of $768\times768$ pixels. As for Janus-Pro, we employed its 1B and 7B version to generate images with a resolution of $384\times384$ pixels.

\textbf{Baselines.}
We compare against two primary baselines: 1) Training-free methods, including Jacobi Decoding~\cite{song2021accelerating}, SJD~\cite{teng2024accelerating}, Amplify~\cite{leviathan2023fast}, Addition~\cite{yin2024theoretical} and GSD~\cite{so2025grouped}; 2) Training-based methods, including EAGLE~\cite{li2024eagle} and LANTERN~\cite{jang2024lantern}.

\begin{figure*}[t]
    \includegraphics[width=1\linewidth]{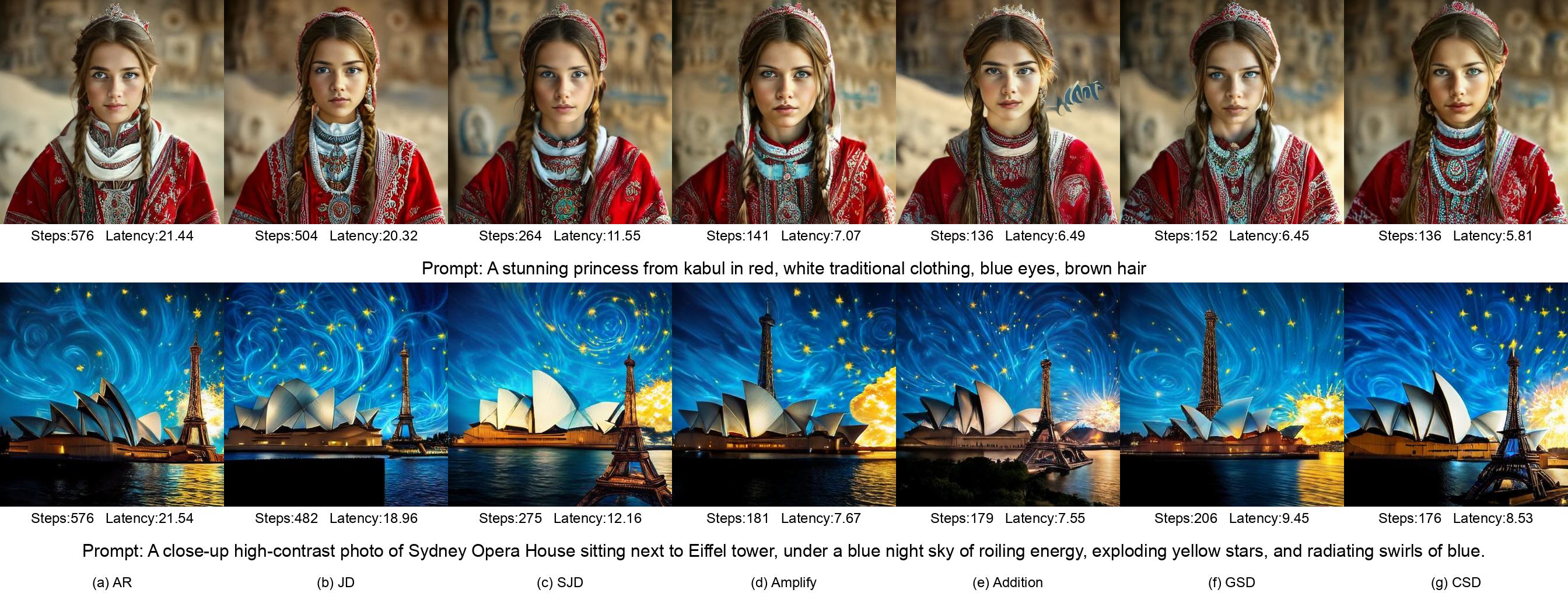}
    \caption{Comparison between training-free methods on Janus-Pro(7B)}
    \label{fig:training-free vis}
\end{figure*}

\begin{figure*}[t]
\centering
    \includegraphics[width=1\linewidth]{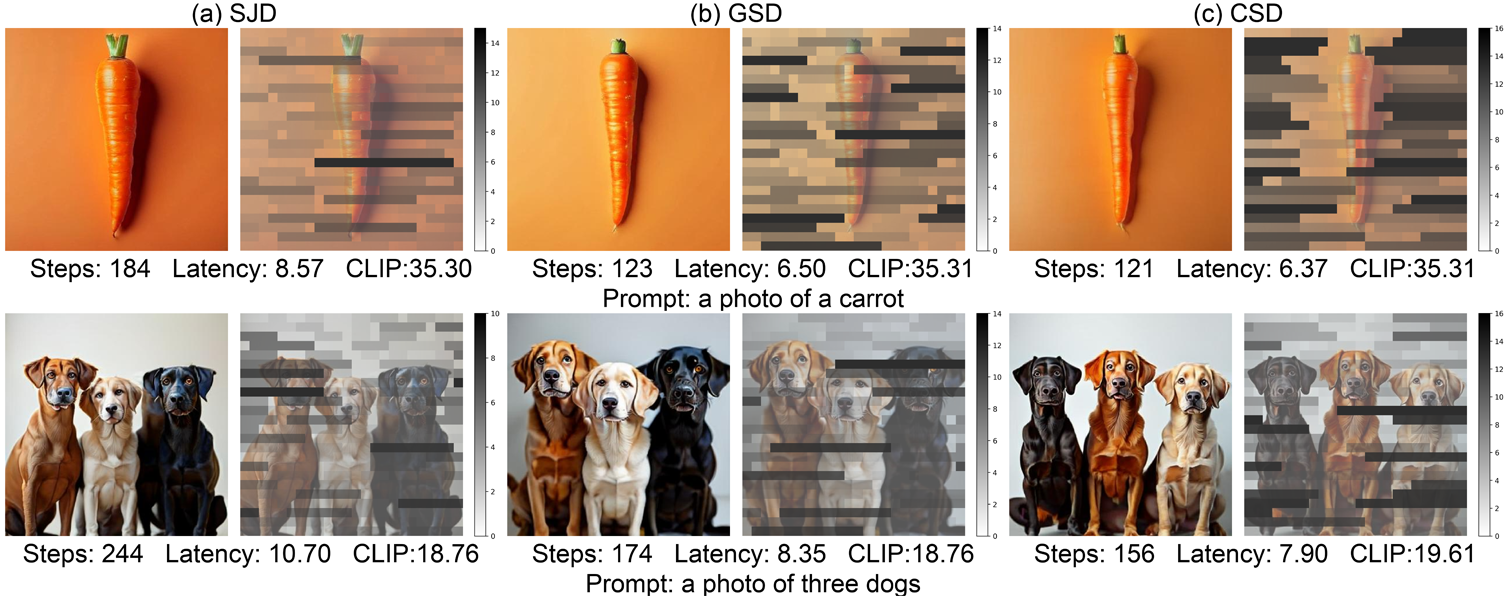}
    \caption{Visualization of accelerated tokens on Janus-Pro(7B)}
    \label{fig:accelerate}
\end{figure*}

\textbf{Evaluation Metrics.}
For visual quality, we use the CLIP Score~\cite{radford2021learning} and the FID Score~\cite{lin2014microsoft} as evaluation metrics. The former quantifies the semantic correspondence between generated images and text prompts, while the latter is computed by comparing the statistical distribution of generated images against that of the ground-truth validation set images. To evaluate generation speed, we measure the average latency and steps required to generate a single image.
 
\textbf{Implementation Details.}
 For Lumina-mGPT, we use top-k sampling with $k=2000$ and classifier-free guidance (CFG) with a scale factor of 3. Following GSD~\cite{so2025grouped}, the temperature and window size are default set to 1 and 16, respectively. For Janus-Pro, we set the CFG scale to 5, which matches its default configuration.

\subsection{Quantitative Comparison}
We summarize the quantitative results for accelerating Lumina-mGPT and Janus-Pro in~\cref{mgpt_coco} and~\cref{janus_coco}, respectively.

For Lumina-mGPT shown in~\cref{mgpt_coco}, the proposed CSD achieves the highest CLIP score of 31.49 when accelerating by 2.6$\times$, outperforming GSD by 0.16 in CLIP score with a reduction in inference time of 1.23s. This indicates that relaxing constraints on regions with high target entropy does not significantly compromise image quality. Additionally, our CSD achieves comparable results to training-based Eagle and Lantern methods in terms of both CLIP score and latency acceleration rate.

For Janus-Pro shown in~\cref{janus_coco}, our CSD achieves the best performance in terms of latency and FID. Specifically, on the 7B model, at a comparable acceleration rate around 3.5$\times$, our CSD with $\delta=0.65$ outperforms the previous SOTA method GSD by 0.12 in CLIP score and 0.31 in FID, demonstrating substantial performance gains. Moreover, when prioritizing speed, relaxing the threshold $\delta$ to 0.8 enables CSD to reach 4.33$\times$ latency acceleration, reducing inference time to 5.30s. This represents a 24\% reduction compared to GSD and a 26\% reduction compared to Addition, while maintaining competitive quality with a CLIP score of 32.21 and an FID of 33.58. On the 1B model, CSD remains equally effective, achieving up to 3.35$\times$ latency reduction with competitive quality. Notably, the optimal $\delta$ shifts from 0.65 (7B) to 0.35 (1B), as smaller models exhibit greater discrepancy between target and draft distributions. This demonstrates that CSD is adaptive and robust: the entropy-based mechanism automatically responds to model capacity, and the $\delta$ threshold can be tuned to balance speed and quality across different model scales.

\subsection{Qualitative Comparison}

We further present the visualization results for image generation in~\cref{fig:training-free vis}. In the top panel, our CSD generates complex and elaborate details in the face and clothing with the lowest latency, avoiding facial distortion or artifacts, compared to all methods except AR and JD. Although JD seems to achieve more nature face, it requires 20.32s almost consistent with the inference time of baseline AR. The bottom panel illustrates a surreal scene, where our method continues to properly preserve spatial relationships and retains fine-grained details. More visualization results are provided in Appendix~\cref{apdx:more visual}.

As show in~\cref{fig:accelerate} we visualize the comparison of actual acceleration regions across different methods, where darker regions indicate that more tokens in the corresponding areas are predicted by the draft branch. It can be observed that previous methods fail to adaptively adjust acceleration regions according to image content. For instance, when generating a carrot, GSD still exhibits extensive dark acceleration regions on the main subject, while considerable acceleration potential remains unused in the plain background areas. In contrast, our proposed CSD concentrates acceleration on background regions, which explains its robustness under high compression ratios. Notably, since the carrot generation task is relatively simple, all methods achieve comparable CLIP scores, yet CSD attains the fastest inference speed of 6.37 s. For the more complex case of generating three dogs, CSD runs 2.8 s faster than SJD and achieves a 0.85-point improvement in CLIP score.


\subsection{Ablation Study}

\begin{table}[t]
  \caption{Effects of different $\lambda$ and $\delta$}
  \label{hyperparameter}
  \begin{center}
    \begin{small}
      \begin{sc}
        \begin{tabular}{lccccr}
          \toprule
          $\lambda$ & $\delta$ & Latency($\downarrow$) & Step($\downarrow$) & CLIP score($\uparrow$) \\
          \midrule
          0.5 & 0.65   & 6.53 & 166.1 & 32.28  \\
          \midrule
          0.3 & 0.65   & 6.96 & 172.6 & 32.28  \\
          0.7 & 0.65   & 6.46 & 155.8 & 32.25  \\
          \midrule
          0.5 & 0.45   & 9.44 & 223.5 & 32.31  \\
          0.5 & 0.75   & 6.02 & 145.9 & 32.25  \\
          \bottomrule
        \end{tabular}
      \end{sc}
    \end{small}
  \end{center}
  \vskip -0.1in
\end{table}

\begin{table}[t]
  \caption{Effect of components}
  \label{components}
  \begin{center}
    \begin{small}
      \begin{sc}
        \begin{tabular}{lcccr}
          \toprule
          Configuration & Latency($\downarrow$) & Step($\downarrow$) & CLIP($\uparrow$) \\
          \midrule
          w/o filter \\
          Amplify($k=1.3$) & 10.64 & 243.3 & 32.28  \\
          Addition($\epsilon=0.1$) & 10.64 & 252.3 & 32.27  \\
          Entropy($\lambda=0.02$) & 10.82 & 258.7 & 32.30\\
          \midrule
          w/ filter \\
          TV($\delta=0.35$) & 9.69 & 252.7 & 32.32 \\
          KL($\delta=0.4$) & 10.36 & 258.0 & 32.32  \\
          RKL($\delta=0.4$) & 10.01 & 255.8 & 32.32  \\
          \bottomrule
        \end{tabular}
      \end{sc}
    \end{small}
  \end{center}
  \vskip -0.1in
\end{table}

\begin{figure}
    \centering
    \includegraphics[width=0.8\linewidth]{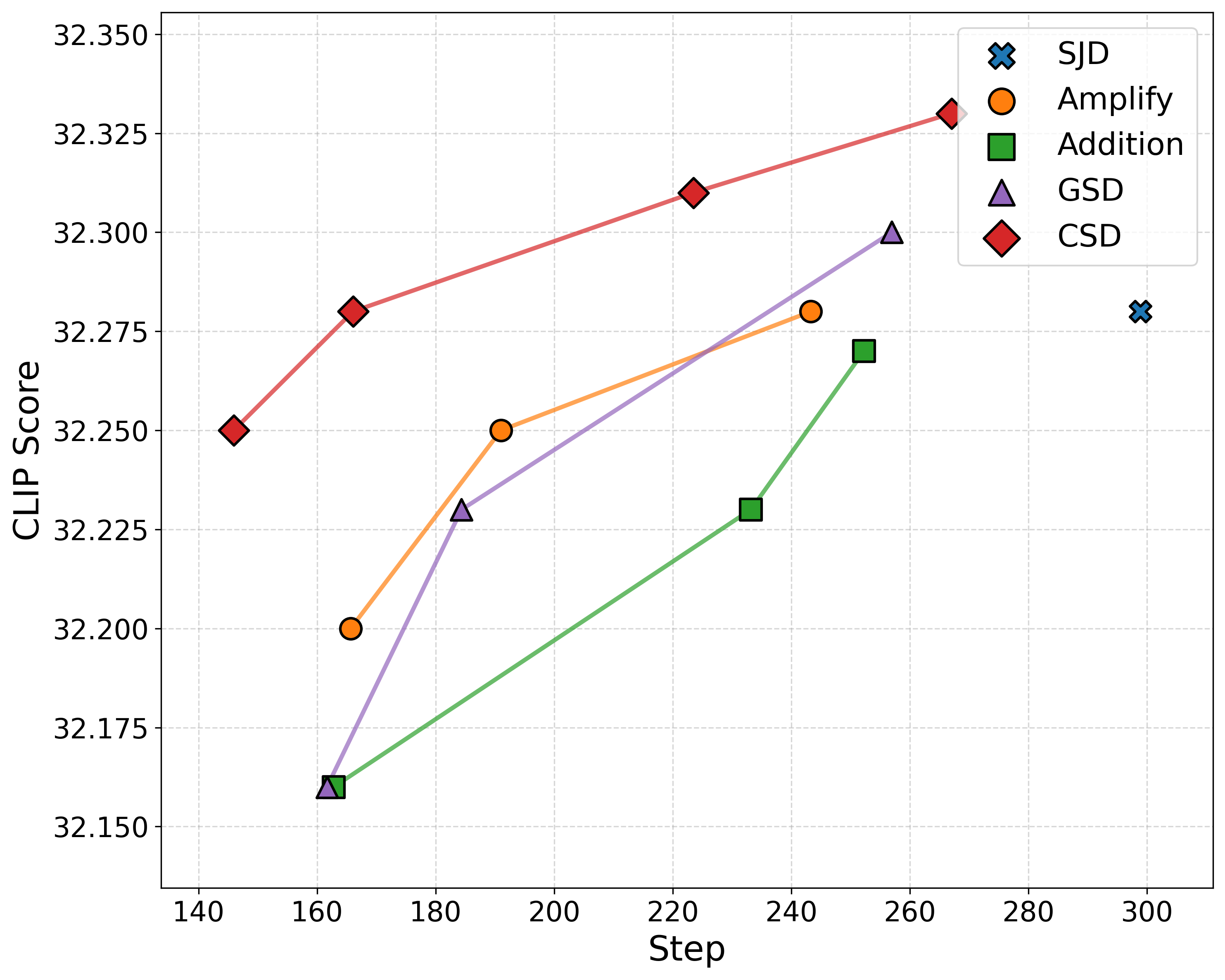}
    \caption{Pareto-front comparison}
    \label{fig:pareto}
\end{figure}

\begin{table*}[t]
  \caption{Quantitative results on Geneval benchmark with Janus-Pro 7B as the baseline.}
  \label{janus_geneval}
  \begin{center}
    \resizebox{\textwidth}{!}{
    \begin{small}
      \begin{sc}
        \begin{tabular}{lcccccccccc}
          \toprule
          Method & SingleObj. & TwoObj. & Counting & Colors & Position & ColorAttri. & Overall($\uparrow$) & Latency($\downarrow$) & Steps($\downarrow$) \\
          \midrule
          Vanilla AR & 0.88 & 0.97 & 0.59 & 0.64 & 0.78 & 0.88 & 0.79 & 22.95 & 576 \\
          SJD~\cite{teng2024accelerating} & 0.90 & 0.98 & 0.59 & 0.65 & 0.75 & 0.89 & 0.79 & 11.95 & 293.6 \\
          Ours($\lambda=0.5,\delta=0.65$) & 0.88 & 0.97 & 0.60 & 0.63 & 0.77 & 0.88 & 0.79 & 6.84 & 161.7 \\
          Ours($\lambda=0.5,\delta=0.8$) & 0.89 & 0.98 & 0.54 & 0.65 & 0.76 & 0.87 & 0.78 & 5.53 & 130.8 \\
          \bottomrule
        \end{tabular}
      \end{sc}
    \end{small}
    }
  \end{center}
  \vskip -0.1in
\end{table*}

\begin{table}[t]
  \caption{Computational overhead comparison on Janus-Pro 7B using an MS-COCO 1000-subset.}
  \label{combined_latency}
  \begin{center}
    \begin{small}
      \begin{sc}
        \begin{tabular}{lc}
          \toprule
          Method / Component & Latency (ms) \\
          \midrule
          SJD (baseline) & 35.40 \\
          GSD & 37.24 \\
          CSD (ours) & 35.46 \\
          \quad -Entropy Calculation & 0.038 \\
          \quad -TV Calculation \& Filtering & 0.020 \\
          \bottomrule
        \end{tabular}
      \end{sc}
    \end{small}
  \end{center}
  \vskip -0.1in
\end{table}



\textbf{Effect of $\lambda$.} 
The optimal choice of $\lambda$ is stable and can be fixed without extensive tuning. As shown in~\cref{hyperparameter}, $\lambda=0.5$ consistently yields a good trade-off between acceleration and quality, while larger $\lambda$ leads to noticeable performance degradation and smaller one decreases the acceleration rate with the same CLIP score. The relaxation coefficient $\lambda$ has a relatively minor effect on decoding speed, whereas large values of $\lambda$ lead to noticeable performance degradation. We therefore suggest $\lambda=0.5$ as a suitable setting.

\textbf{Effect of $\delta$.}
This hyperparameter is of critical importance, as the model architecture exerts significant influence on both performance and inference speed. As shown in~\cref{hyperparameter}, a larger $\delta$ yields high acceleration but at the expense of performance, whereas a smaller $\delta$ results in lower acceleration but superior performance stability. Consequently, we propose a search strategy starting with determining the target performance. For high performance, we aim for 0.35, and for high acceleration, 0.65. Then use a search with 0.1 interval based on results. Generally, tuning shouldn't exceed four times. Moreover, we also found that the optimal $\delta$ exhibits consistent behavior across different datasets. As shown in~\cref{janus_geneval}, the $\delta$ values validated on MS-COCO transfer well to the GenEval~\cite{ghosh2023geneval} benchmark. For the same Janus-Pro 7B model, setting $\delta=0.65$ yields a 3.51$\times$ acceleration on MS-COCO and a 3.36$\times$ acceleration on GenEval, with both benchmarks exhibiting lossless visual quality.

\textbf{Effect of Individual Components.} 
In~\cref{components}, we assess the impact of individual components. First, we evaluate three relaxation strategies without a filter—Amplify, Addition, and Entropy—under comparable acceleration ratios. Among these, entropy-based relaxation achieves the highest CLIP score of 32.30, outperforming Amplify (32.28) and Addition (32.27), demonstrating its effectiveness as a guidance signal. Building upon the entropy-based relaxation strategy, we integrate a distribution alignment filter using three commonly used distribution discrepancy measures: Total Variation (TV), Kullback–Leibler (KL) divergence, and Reverse Kullback–Leibler (RKL) divergence. All three configurations yield nearly identical CLIP scores of 32.32. This indicates that adding a filter provides clear additional benefits, while the specific choice of discrepancy measure has only a marginal impact on the final performance. Taken together, both components play complementary roles: entropy-based relaxation accelerates inference in smooth regions, while the filter ensures that relaxation is only applied when draft and target distributions are well-aligned.

\textbf{Pareto Front Analysis.} 
In~\cref{fig:pareto}, we extend the experimental evaluation by visualizing the pareto fronts, which illustrate the trade-off between acceleration ratio and CLIP score. Our method consistently achieves a favorable balance: it accelerates primarily smooth regions while leveraging the alignment filter to control distribution divergence, thereby preserving image quality. Notably, even at a high acceleration ratio (approximately 160 steps), our method attains a CLIP score of 32.28, remaining competitive with SJD (32.30). In contrast, other acceleration methods yield CLIP scores below 32.20 under the same conditions, indicating a sharper quality degradation. These results underscore the robustness of our approach in maintaining semantic alignment and perceptual quality across varying acceleration levels.

\textbf{Computational Overhead.} 
To evaluate the computational overhead of entropy calculation and different methods, we measured the average per-iteration latency on Janus-Pro 7B using an MS-COCO 1000-subset, as summarized in~\cref{combined_latency}. The average per-iteration latency of CSD is 35.46 ms. Within this, the entropy calculation takes only 0.038 ms, and the TV calculation and filtering takes 0.020 ms. Compared to the SJD baseline, CSD adds only 0.06 ms per iteration—a mere 0.17\% computation increase. Moreover, CSD achieves lower computation overhead than GSD (35.46 ms vs. 37.24 ms), as GSD relies on clustering-based grouping that requires sorting and distance computations.
Thus, CSD preserves speculative decoding efficiency with negligible extra cost.

\section{Conclusion}
This work tackles the inefficiency of autoregressive (AR) image generation and the content-agnostic flaw of existing speculative decoding (SD) methods. We propose Content-aware Speculative Decoding (CSD), a novel SD algorithm that leverages the target model’s entropy as a content signal to enable adaptive, high-quality acceleration for AR image generation. CSD integrates an entropy-based probabilistic relaxation mechanism with an optimal resampling strategy, dynamically tuning token acceptance probabilities by image regions’ informational uncertainty. Moreover, its total variation-based distribution alignment filter aligns generated token distributions with the target model. Extensive experiments on Lumina-mGPT and Janus-Pro validate CSD’s superiority over SOTA SD methods.

\section*{Acknowledgements}
This work is supported by the National Natural Science Foundation of China (NO. 62572193, NO. 12371289), China Postdoctoral Science Foundation (NO. 2024M760930), the Open Research Fund of the Key Laboratory of Advanced Theory and Application in Statistics and Data Science, Ministry of Education, and the Fundamental Research Funds for the Central Universities.

\section*{Impact Statement}

This paper presents work whose goal is to advance the field of Machine
Learning. There are many potential societal consequences of our work, none
which we feel must be specifically highlighted here.

\bibliography{example_paper}
\bibliographystyle{icml2026}


\newpage
\appendix
\onecolumn
\section*{Appendix}

\textbf{Summary.} This appendix provides proofs of key theorems and additional visual results. Appendix~\cref{apdx:proof_1} presents the proof of~\cref{thm:optimal_solution}, Appendix~\cref{apdx:proof_2} provides the proof of~\cref{thm:pareto_front} and Appendix~\cref{apdx:more visual} offers more visualized results and analysis on Janus-Pro.

\section{Proof of~\cref{thm:optimal_solution}}
\label{apdx:proof_1}

{ 
\renewcommand{\thetheorem}{3.1} 
\begin{theorem}
Let $\min(1,\frac{q(x)}{p(x)})\le b(x)\le1$. Then the resampling distribution defined by  $\Pcal(x)=\text{norm}\{\max(0,q(x)-p(x))\}$ achieves the minimum total variation loss, i.e., $\text{Loss}^*(b) = \min_{\Pcal} \text{TV}(P,q)=\frac{1}{2}\sum_{x}|q(x)-b(x)p(x)|-\frac{1}{2}[1-b(x)]p(x)$.
\end{theorem}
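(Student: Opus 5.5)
The plan is to reduce the minimization over $\Pcal$ to an elementary $\ell_1$ problem. Write $R:=\sum_{x'}[1-b(x')]p(x')\ge 0$ for the rejection mass and $r(x):=q(x)-b(x)p(x)$ for the residual. Then $P(x)-q(x)=-(r(x)-R\,\Pcal(x))$, so
\begin{equation*}
\text{TV}(P,q)=\tfrac12\sum_x |r(x)-R\,\Pcal(x)|.
\end{equation*}
The key observation is $\sum_x r(x)=1-\sum_x b(x)p(x)=R$. This means $R\,\Pcal$ is a nonnegative vector with the same total mass as $r$, and we are asked to minimize its $\ell_1$ distance to $r$.

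\textbf{Step 1 (universal lower bound).} For any distribution $\Pcal$, split the vocabulary into $N=\{r<0\}$ and its complement. On $N$ we have $|r-R\Pcal|=|r|+R\Pcal$. On the complement, the triangle inequality gives $|r-R\Pcal|\ge r-R\Pcal$. Summing,
\begin{equation*}
\sum_x|r-R\Pcal|\ge \sum_x|r(x)| + R\,\Pcal(N)-R\,\Pcal(N^c)\ge \sum_x|r(x)|-R .
\end{equation*}
Hence $\min_\Pcal \text{TV}(P,q)\ge \tfrac12\sum_x|q(x)-b(x)p(x)|-\tfrac12\sum_x[1-b(x)]p(x)$, which is the claimed value (the second term read as the sum $R$). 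Equality holds iff $\Pcal$ is supported on $\{r\ge 0\}$ and $R\,\Pcal\le r$ pointwise there.

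\textbf{Step 2 (the proposed $\Pcal$ attains it).} Here the hypothesis $\min(1,q/p)\le b\le 1$ is used.
\begin{itemize}
\item On $\{q>p\}$ the hypothesis forces $b=1$, so $r=q-p>0$.
\item On $\{q\le p\}$ we have $\max(0,q-p)=0$, so $\Pcal=\text{norm}\{\max(0,q-p)\}$ puts no mass there. Its support is therefore inside $\{r\ge0\}$, and on that support $r=(q-p)^+$.
\end{itemize}
With $Z:=\sum_x (q-p)^+=\text{TV}(p,q)$, we get $\Pcal=(q-p)^+/Z$. The pointwise condition $R\,\Pcal\le r$ then becomes simply $R\le Z$. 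To see this, compute
\begin{equation*}
R=\sum_x r=\sum_{q>p}(q-p)+\sum_{q\le p}\bigl(q-b\,p\bigr),
\end{equation*}
and note that each term of the second sum is $\le 0$ because $b\ge q/p$ there. So $R\le Z$, equality holds in Step 1, and the minimum is attained.

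\textbf{Main obstacle.} The delicate point is the degenerate case $Z=0$, where the normalization is undefined. Then $p=q$, so $0\le R\le Z=0$ forces $R=0$. In that case $P=bp=q$ regardless of $\Pcal$, and the bound is trivially attained for any choice, so the statement holds under any convention for $\text{norm}$. Beyond this, I would only need to double-check the sign bookkeeping in Step 1; the whole argument rests on the mass identity $\sum r=R$ and the inequality $R\le\text{TV}(p,q)$. That inequality also previews \cref{thm:pareto_front}: the optimal loss equals $\tfrac12(\sum|r|-R)=\sum_x r^-=Z-R$.
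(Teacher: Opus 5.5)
Your proof is correct. The attainment step matches the paper's, but you reach the lower bound by a more direct route.

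The paper normalizes by the rejection mass, $A(x)=(q(x)-b(x)p(x))/R$, and first runs an exchange argument. If a minimizer $\Pcal^*$ put mass on some $\bar x$ with $A(\bar x)<0$, moving that mass to an $\hat x\in A_+$ with $A(\hat x)>\Pcal^*(\hat x)$ would strictly decrease $\sum_x|A(x)-\Pcal(x)|$. Only once minimizers are known to vanish on $A_-$ does the paper apply the triangle inequality on $A_+$ to get $\sum_x|A(x)|-1$.

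Your Step 1 collapses this into one pointwise estimate valid for every $\Pcal$. You therefore need neither the existence of a minimizer nor the contradiction argument. By working with $r$ and $R\Pcal$ instead of $r/R$, you also never divide by $R$, which the paper tacitly assumes is positive. Your argument covers $R=0$ (e.g.\ $b\equiv 1$) unchanged. The paper's inequality $R\le S$, with $S=\sum_x\max(0,q(x)-b(x)p(x))$, is exactly your $R\le Z$.

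You are also more complete on the identification step. The paper checks $\text{norm}\{\max(0,q-bp)\}=\text{norm}\{\max(0,q-p)\}$ only for $b=\min(1,q/p)$ and $b=\min(1,q/p+\epsilon)$. Your observation that $b=1$ on $\{q>p\}$ and $q-bp\le 0$ on $\{q\le p\}$ gives it for every admissible $b$, as the theorem asserts. You also handle $Z=0$, which the paper ignores.

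One small overstatement: the ``iff'' in your equality condition needs $R>0$. When $R=0$, every $\Pcal$ is optimal, including ones that put mass on $N$. You only use the sufficiency direction, so nothing breaks.
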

} 


\begin{proof}
Define $A(x)=\dfrac{q(x)-b(x)p(x)}{\sum_{{x'}}[1-b({x'})]p(\hat{x})}$ and the sets $A_+=\{x:A(x)\ge0\}$, $A_-=\{x:A(x)<0\}$. The optimal resample distributions are$\{\Pcal^*:\Pcal^*|_{A_-}(\cdot)=0;0\le \Pcal^*|_{A_+}(\cdot) \le A(x)\}$, and the corresponding discrepancy is $\text{Loss}^*(b)=\frac{1}{2}\sum_{x}|q(x)-b(x)p(x)|-\frac{1}{2}[1-b(x)]p(x)$.

First, we prove that for any optimal solution $\Pcal^*$, it must satisfy $\Pcal^*(x)=0$ for all $x\in A_-$.

Assume for contradiction that there exists $\bar{x} \in A_-$ such that $\Pcal^*(x)>0$.

Suppose that for all $x \in A_+$, we have $A(x)\le \Pcal^*(x)$.

Then
\begin{align*}
    1&=\sum_{x}A(x)=\sum_{x\in A_-}A(x)+\sum_{x\in A_+}A(x)\le A(\bar{x})+\sum_{x\in A_+}A(x) \\
    &<\sum_{x\in A_+}A(x)\le \sum_{x\in A_+}\Pcal^*(x)\le \sum_{x}\Pcal^*(x)=1
\end{align*}
which is a contradiction. Thus, there exists $\hat{x} \in A_+$ such that $A(\hat{x})>\Pcal^*(\hat{x})$.

We have $-\Pcal^*(\bar{x})<0$ and $A(\hat{x})-\Pcal^*(\hat{x})>0$. By the triangle inequality,
\[
|-\Pcal^*(\bar{x})|+|A(\hat{x})-\Pcal^*(\hat{x})|>|A(\hat{x})-\Pcal^*(\hat{x})-\Pcal^*(\bar{x})|.
\]

Since $A(\bar{x})<0$,
\[
|A(\bar{x})|+|-\Pcal^*(\bar{x})|=|A(\bar{x})-\Pcal^*(\bar{x})|.
\]

Adding $|A(\bar{x})|$ to both sides yields
\[
|A(\bar{x})-\Pcal^*(\bar{x})|+|A(\hat{x})-\Pcal^*(\hat{x})|>|A(\bar{x})|+|A(\hat{x})-\Pcal^*(\hat{x})-\Pcal^*(\bar{x})|.
\]

Now, construct a new distribution $\Pcal^{\prime}(x)$ as follows:
\[
\Pcal^{\prime}(x)=
\begin{cases}
\Pcal^*(x),\quad x \notin\{\bar{x},\hat{x}\}, \\
0,\quad x=\bar{x}, \\
\Pcal^*(x)+\Pcal^*(\bar{x}), \quad x=\hat{x}.
\end{cases}
\]

Substituting the optimal solution into the objective function, we have
\begin{align*}
    \sum_{x}|A(x)-\Pcal^*(x)|&=\sum_{x \notin\{\bar{x},\hat{x}\}}|A(x)-\Pcal^*(x)|+|A(\bar{x})-\Pcal^*(\bar{x})|+|A(\hat{x})-\Pcal^*(\hat{x})| \\
    &=\sum_{x \notin\{\bar{x},\hat{x}\}}|A(x)-\Pcal^{\prime}(x)|+|A(\bar{x})-\Pcal^*(\bar{x})|+|A(\hat{x})-\Pcal^*(\hat{x})| \\
    &>\sum_{x \notin\{\bar{x},\hat{x}\}}|A(x)-\Pcal^{\prime}(x)|+|A(\bar{x})|+|A(\hat{x})-\Pcal^*(\hat{x})-\Pcal^*(\bar{x})| \\
    &=\sum_{x \notin\{\bar{x},\hat{x}\}}|A(x)-\Pcal^{\prime}(x)|+|A(\bar{x})-\Pcal^{\prime}(\bar{x})|+|A(\hat{x})-\Pcal^{\prime}(\hat{x})| \\
    &=\sum_{x}|A(x)-\Pcal^{\prime}(x)|.
\end{align*}

This implies that $\Pcal^{\prime}(x)$ yields a better objective function value, contradicting the optimality of $\Pcal^*(x)$.

Therefore, $\Pcal^*(x)=0$ for all $\bar{x} \in A_-$.

Next, we compute the optimal objective function value:
\begin{align*}
\sum_{x}|A(x)-\Pcal^*(x)|&=\sum_{x\in A_-}|A(x)-\Pcal^*(x)|+\sum_{x\in A_+}|A(x)-\Pcal^*(x)| \\
&=\sum_{x\in A_-}|A(x)|+\sum_{x\in A_+}|A(x)-\Pcal^*(x)|. \\
\end{align*}

Consider the generalization of the triangle inequality to $n$ terms:
\[
|\sum_{x=1}^{n}a_x|\le\sum_{x=1}^{n}|a_x|.
\]

Applying this, we have
\begin{align*}
\sum_{x}|A(x)-\Pcal^*(x)|&\ge \sum_{x\in A_-}|A(x)|+|\sum_{x\in A_+}A(x)-\sum_{x\in A_+}\Pcal^*(x)| \\
&=\sum_{x\in A_-}|A(x)|+|\sum_{x\in A_+}A(x)-1| \\
&=\sum_{x\in A_-}|A(x)|+\sum_{x\in A_+}A(x)-1 \\
&=\sum_{x}|A(x)|-1, \\
\end{align*}
with equality if and only if $A(x)-\Pcal^*(x)\ge0$ for all $x\in A_+$

Substituting into the original objective function:
\[
\frac{R}{2}\sum_{x}|A(x)-\Pcal(x)|=\frac{R}{2}\sum_{x}|A(x)|-\frac{R}{2}=\frac{1}{2}\sum_{x}|q(x)-b(x)p(x)|-\frac{1}{2}\sum_{x}[(1-b(x))p(x)].
\]

In summary, the set of optimal solutions is
\[
\{\Pcal^*:\Pcal^*|_{A_-}(\cdot)=0;0\le\Pcal^*|_{A_+}(\cdot)\le A(\cdot)\},
\]

and the optimal objective function value is
\[
\frac{1}{2}\sum_{x}|q(x)-b(x)p(x)|-\frac{1}{2}\sum_{x}[(1-b(x))p(x)].
\]

We define
\[
D(x) = q(x) - b(x)p(x),\quad R = \sum_x [1-b(x)]p(x), \quad A(x) = \dfrac{D(x)}{R},
\]
\[
A_+ = \{ x : A(x) \ge 0 \} = \{ x : D(x) \ge 0 \},\quad
A_- = \{ x : A(x) < 0 \} = \{ x : D(x) < 0 \}.
\]
Using the positive part of $D(x)>0$, we construct a distribution $\Pcal_{norm}(x)$:
\[
D_+(x) = \max(0, D(x)), \quad
S = \sum_x D_+(x) = \sum_{x \in A_+} D(x).
\]
\[
\Pcal_{norm}(x) = \frac{D_+(x)}{S} =
\begin{cases}
\displaystyle \frac{D(x)}{S}, & x \in A_+, \\[8pt]
0, & x \in A_-.
\end{cases}
\]

This distribution clearly satisfies non-negativity and normalization, i.e., $\Pcal_{norm}(x) \ge 0$ and $\sum_x \Pcal_{norm}(x) = 1$.

By definition,
\[
\Pcal_{norm}|_{A_-}(\cdot)=0.
\]

Moreover,
\begin{align*}
    R&= \sum_x [1-b(x)]p(x) \\
    &= \sum_x p(x)-\sum_{x}b(x)p(x) \\
    &=1-\sum_{x}b(x)p(x)  \\
    &=\sum_{x}q(x)-\sum_{x}b(x)p(x)\\
    &=\sum_{x}[q(x)-b(x)p(x)]\\
    &=\sum_{x}D(x).
\end{align*}

Thus,
\[
R=\sum_x D(x)=\sum_{x \in A_+} D(x) + \sum_{x \in A_-} D(x)=S+\sum_{x \in A_-} D(x).
\]
For $x \in A_-$, we have $D(x) < 0$, so $\sum_{x \in A_-} D(x) \le 0$. Therefore,
\[
R = S+\sum_{x \in A_-} D(x) \le S.
\]
For $x\in A_+$,
\[
 \Pcal_{norm}(x) = \frac{D(x)}{S} \le \frac{D(x)}{R} = A(x).
\]
Hence,
\[
\Pcal_{norm}(x)\in \{\Pcal^*:\Pcal^*|_{A_-}(\cdot)=0;0\le\Pcal^*|_{A_+}(\cdot)\le A(\cdot)\}.
\]
If $b(x)=\min(1,\dfrac{q(x)}{p(x)})$, then
\[
D(x)=q(x)-min(1,\dfrac{q(x)}{p(x)})\cdot p(x)=q(x)-min(p(x),q(x))=\max(0,q(x)-p(x))>0,
\]
and 
\[
\Pcal_{norm}(x)=\text{norm}\{\max(0,q(x)-p(x))\},
\]
which aligns with standard speculative decoding.

If we relax the acceptance probability, i.e., $b(x)=\min(1,\dfrac{q(x)}{p(x)}+\epsilon)$ with $\epsilon>0$, then
\[
D(x)=q(x)-min(1,\dfrac{q(x)}{p(x)}+\epsilon)\cdot p(x)=q(x)-min(p(x),q(x)+\epsilon p(x))=\max(-\epsilon p(x),q(x)-p(x)),
\]
and
\[
\Pcal_{norm}(x)=\text{norm}\{\max(0,q(x)-p(x))\}
\]
Therefore, regardless of how much the acceptance probability is relaxed, the optimal resampling distribution remains $\text{norm}\{\max(0,q(x)-p(x))\}$.
\end{proof}

\section{Proof of~\cref{thm:pareto_front}}
\label{apdx:proof_2}

{
\renewcommand{\thetheorem}{3.2}
\begin{theorem}
Under the optimal resampling distribution, the following identity holds: $P(\text{reject})+\min_{\Pcal}\text{TV}(P,q)=\text{TV}(p,q)$, where $P(\text{reject})=\sum_{x}[1-b(x)]p(x)$
\end{theorem}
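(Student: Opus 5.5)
We prove Theorem~\ref{thm:pareto_front} by substituting the closed form of the optimal loss from Theorem~\ref{thm:optimal_solution} and exploiting the constraint $\min(1,\frac{q(x)}{p(x)})\le b(x)\le 1$ pointwise. By Theorem~\ref{thm:optimal_solution},
\[
\min_{\Pcal}\text{TV}(P,q)=\frac{1}{2}\sum_{x}|q(x)-b(x)p(x)|-\frac{1}{2}\sum_{x}[1-b(x)]p(x).
\]
Adding $P(\text{reject})=\sum_x[1-b(x)]p(x)$ therefore reduces the claim to the pointwise-summable identity
\[
\frac{1}{2}\sum_{x}|q(x)-b(x)p(x)|+\frac{1}{2}\sum_{x}[1-b(x)]p(x)=\frac{1}{2}\sum_x|p(x)-q(x)|.
\]

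To establish it, split the vocabulary into $B_+=\{x:q(x)\ge p(x)\}$ and $B_-=\{x:q(x)<p(x)\}$.

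On $B_+$, the constraint forces $b(x)=1$, since $\min(1,q/p)=1$. Hence the summand on the left is $|q(x)-p(x)|+0=q(x)-p(x)=|p(x)-q(x)|$.

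On $B_-$, we have $q(x)/p(x)\le b(x)\le 1$, so $b(x)p(x)\ge q(x)$. The absolute value therefore opens as $b(x)p(x)-q(x)$, and adding $(1-b(x))p(x)$ gives $p(x)-q(x)=|p(x)-q(x)|$.

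Summing over both sets and halving yields $\text{TV}(p,q)$, which is the identity.

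The only delicate points are bookkeeping. First, the trailing term in Theorem~\ref{thm:optimal_solution} must be read as a sum over $x$, as the proof in Appendix~\ref{apdx:proof_1} clarifies. Second, positions with $p(x)=0$ need handling: they lie in $B_+$ and contribute $q(x)$ on both sides regardless of $b(x)$, so the identity is unaffected. I expect no real obstacle, because the sign determination of $q-bp$ on each set, which is the crux, follows directly from the hypothesis on $b$.
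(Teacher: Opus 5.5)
Your proposal is correct and follows the paper's proof essentially step for step: you substitute the closed form of $\text{Loss}^*(b)$ from Theorem~\ref{thm:optimal_solution}, reduce to a pointwise identity, and split into the case $q(x)\ge p(x)$, where the constraint forces $b(x)=1$, and the case $q(x)<p(x)$, where $b(x)p(x)\ge q(x)$ fixes the sign of $q(x)-b(x)p(x)$. Your two additional remarks, reading the trailing term of Theorem~\ref{thm:optimal_solution} as a sum over $x$ and checking positions with $p(x)=0$, are sound and cover details the paper leaves implicit.
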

}

\begin{proof}
Given
\[
\text{Loss}^*(b) =\frac{1}{2}\sum_{x}|q(x)-b(x)p(x)|-\frac{1}{2}\sum_{x}[1-b(x)]p(x),
\]
we have
\begin{align*}
    \text{Loss}^*(b) + P(\text{reject}) &= \frac{1}{2}\sum_{x}|q(x)-b(x)p(x)|-\frac{1}{2}\sum_{x}[1-b(x)]p(x) + \sum_{x}[1-b(x)]p(x) \\
    &= \frac{1}{2}\sum_{x}|q(x)-b(x)p(x)|+\frac{1}{2}\sum_{x}[1-b(x)]p(x).
\end{align*}
To prove the desired result,
\[
\text{Loss}^*(b) + P(\text{reject}) = \text{TV}(p,q)=\frac{1}{2}\sum_{x}|p(x)-q(x)|,
\]
it suffices to show that
\[
\sum_{x} |q(x) - b(x)p(x)| + \sum_{x} [1 - b(x)]p(x) = \sum_{x} |p(x) - q(x)|.
\]

Since $\min(1,\frac{q(x)}{p(x)})\le b(x)\le1$, then we prove the following stronger claim
\[
|q(x) - b(x)p(x)| + (1 - b(x))p(x) = |p(x) - q(x)|.
\]

\begin{itemize}
    \item If $q(x) \geq p(x)$, then $1 = \min(1, \frac{q(x)}{p(x)}) \leq b(x) \leq 1$ implies $b(x) = 1$, so the above is equivalent to $|q(x) - p(x)| = |p(x) - q(x)|$ is always true;
    \item If $q(x) < p(x)$, then $b(x)p(x) \geq \min(p(x), q(x)) = q(x)$. In this case
    \[
    |q(x) - b(x)p(x)| + (1 - b(x))p(x) = b(x)p(x) - q(x) + (1 - b(x))p(x) = p(x) - q(x) = |p(x) - q(x)|.
    \]
\end{itemize}

This concludes the proof.
\end{proof}
\section{More Visualization}
\label{apdx:more visual}
\cref{fig:more visual} provides further visualization results on Janus-Pro. The first row highlights that SJD, Amplify, and Addition fail to strictly adhere to the textual instructions, suffering from issues such as structural asymmetry and inconsistent background colors. In contrast, our method effectively identifies and preserves key compositional elements, ensuring superior instruction fidelity. Furthermore, as illustrated in the second row, both Addition and GSD introduce significant deviations from the reference image in background regions. Our approach, however, maintains high background consistency via the distribution alignment filter, even in these non-salient areas.
\begin{figure}
    \centering
    \includegraphics[width=1\linewidth]{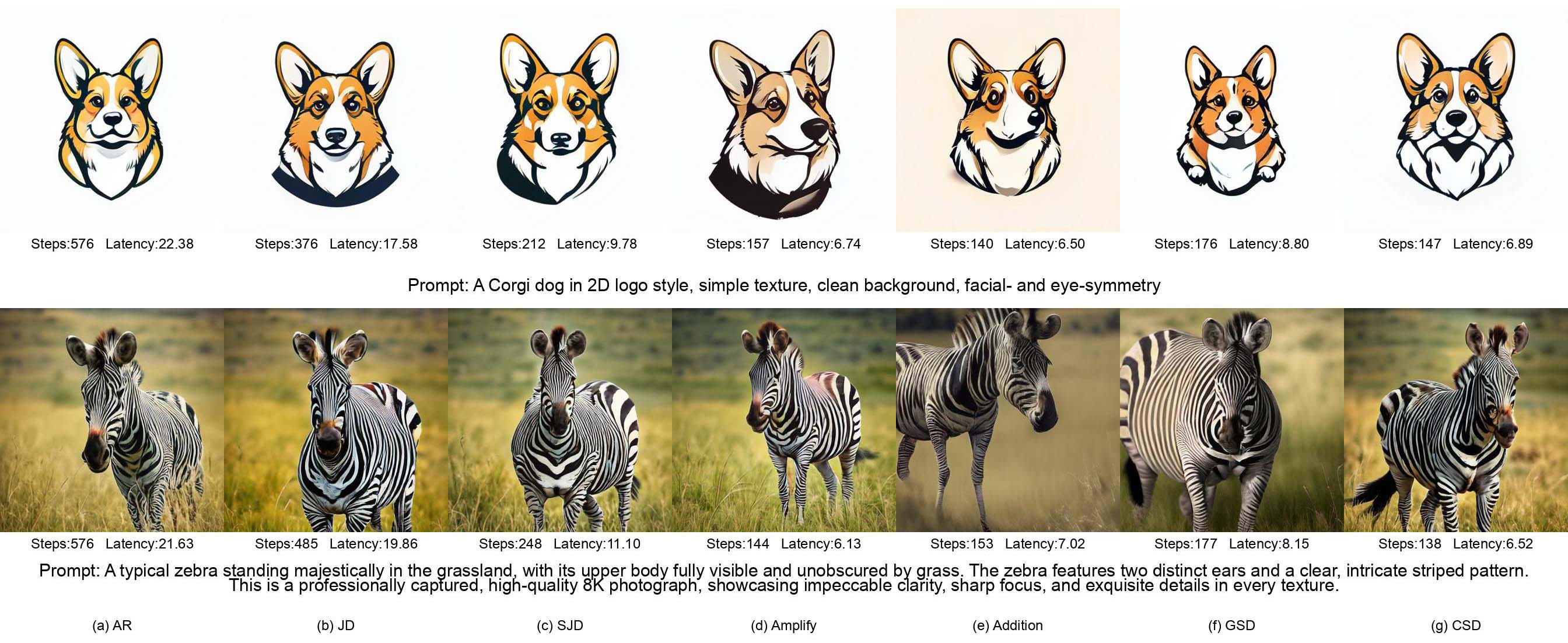}
    \caption{More visualization on Janus-Pro}
    \label{fig:more visual}
\end{figure}

\end{document}